\documentclass[twocolumn, switch]{article} 

\usepackage{preprint}

\usepackage{amsmath, amsthm, amssymb, amsfonts}

\usepackage[numbers,square]{natbib}
\usepackage[utf8]{inputenc}	
\usepackage[T1]{fontenc}	
\usepackage{xcolor}		
\usepackage[colorlinks = true,
            linkcolor = purple,
            urlcolor  = blue,
            citecolor = cyan,
            anchorcolor = black]{hyperref}	
\usepackage{booktabs} 		
\usepackage{nicefrac}		
\usepackage{microtype}		
\usepackage{lineno}		
\usepackage{float}			

\usepackage{lipsum}		

\usepackage{colortbl,xcolor}
\usepackage{pifont}   
\usepackage{amssymb}  
\newcommand{\cmark}{\textcolor{green!60!black}{\ding{51}}}
\newcommand{\xmark}{\textcolor{red!70!black}{\ding{55}}}

\definecolor{bestcolor}{RGB}{0,120,0}      
\definecolor{secondcolor}{RGB}{180,90,0}   

\newcommand{\best}[1]{\textbf{\textcolor{bestcolor}{#1}}}
\newcommand{\second}[1]{\textcolor{secondcolor}{\underline{#1}}}

\newtheorem{theorem}{Theorem}
\newtheorem{corollary}{Corollary}

\usepackage{newfloat}
\DeclareFloatingEnvironment[name={Supplementary Figure}]{suppfigure}
\usepackage{sidecap}
\sidecaptionvpos{figure}{c}

\usepackage{titlesec}
\titlespacing\section{0pt}{12pt plus 3pt minus 3pt}{1pt plus 1pt minus 1pt}
\titlespacing\subsection{0pt}{10pt plus 3pt minus 3pt}{1pt plus 1pt minus 1pt}
\titlespacing\subsubsection{0pt}{8pt plus 3pt minus 3pt}{1pt plus 1pt minus 1pt}

\usepackage{tikz,xcolor,hyperref}

\definecolor{lime}{HTML}{A6CE39}
\DeclareRobustCommand{\orcidicon}{
	\begin{tikzpicture}
	\draw[lime, fill=lime] (0,0) 
	circle [radius=0.16] 
	node[white] {{\fontfamily{qag}\selectfont \tiny ID}};
	\draw[white, fill=white] (-0.0625,0.095) 
	circle [radius=0.007];
	\end{tikzpicture}
	\hspace{-2mm}
}
\foreach \x in {A, ..., Z}{\expandafter\xdef\csname orcid\x\endcsname{\noexpand\href{https://orcid.org/\csname orcidauthor\x\endcsname}
			{\noexpand\orcidicon}}
}

\title{HopFormer: Sparse Graph Transformers \\with Explicit Receptive Field Control}
\newcommand\titleheader{HopFormer: Sparse Graph Transformers with Explicit Receptive Field Control}

\usepackage{xwatermark}

\usepackage{authblk}

\author{Sanggeon Yun}
\author{Raheeb Hassan}
\author{Ryozo Masukawa}
\author{Sungheon Jeong}
\author{Mohsen Imani}

\affil{Department of Computer Science, University of California, Irvine\\\texttt{\{sanggeoy, raheebh, rmasukaw, sungheoj, m.imani\}@uci.edu}}

\begin{document}

\twocolumn[ 
  \begin{@twocolumnfalse} 
  
\maketitle

\begin{abstract}
    Graph Transformers typically rely on explicit positional or structural encodings and dense global attention to incorporate graph topology. In this work, we show that neither is essential. We introduce HopFormer, a graph Transformer that injects structure exclusively through head-specific $n$-hop masked sparse attention, without the use of positional encodings or architectural modifications. This design provides explicit and interpretable control over receptive fields while enabling genuinely sparse attention whose computational cost scales linearly with mask sparsity. Through extensive experiments on both node-level and graph-level benchmarks, we demonstrate that our approach achieves competitive or superior performance across diverse graph structures. Our results further reveal that dense global attention is often unnecessary: on graphs with strong small-world properties, localized attention yields more stable and consistently high performance, while on graphs with weaker small-world effects, global attention offers diminishing returns. Together, these findings challenge prevailing assumptions in graph Transformer design and highlight sparsity-controlled attention as a principled and efficient alternative.
\end{abstract}
\vspace{0.35cm}

  \end{@twocolumnfalse} 
] 



\section{Introduction}
\begin{table*}[t]
\centering
\caption{
Comparison of graph learning architectures by core design components.
\cmark\ indicates preservation of the vanilla Transformer architecture
(i.e., standard multi-head self-attention and feed-forward blocks without architectural modifications),
while \xmark\ denotes architectural modifications.
Here, $|V|$ and $|E|$ denote the number of nodes and edges in the input graph, respectively.
``Flexible'' complexity indicates that the computational cost adapts to graph sparsity and head-specific receptive fields,
scaling at least linearly with $|V|+|E|$ under sparse attention, rather than incurring dense quadratic cost.
}
\label{tab:design_comparison}
\resizebox{\linewidth}{!}{
\begin{tabular}{lccccc}
\toprule
\textbf{Method}
& \textbf{Structural / Positional Encoding}
& \textbf{Vanilla Transformer}
& \textbf{Attention Pattern}
& \textbf{Complexity per Layer} \\
\midrule
\rowcolor{gray!15}
MPNNs
& --
& --
& Sparse (local)
& $\mathcal{O}(|E|)$ \\

\midrule
Graphormer~\cite{ying2021transformers}
& SPD, centrality
& \xmark
& Dense Global Attention
& $\mathcal{O}(|V|^2)$ \\

SAN~\cite{kreuzer2021rethinking}
& Laplacian eigenvectors
& \xmark
& Dense Global Attention
& $\mathcal{O}(|V|^2)$ \\

GraphGPS~\cite{rampasek2022graphgps}
& Structural encodings
& \xmark
& Dense Global Attention
& $\mathcal{O}(|V|^2)$ \\

NodeFormer~\cite{wu2022nodeformer}
& Random features
& \xmark
& Approximated Global Attention
& $\mathcal{O}(|V| + |E|)$  \\

TokenGT~\cite{kim2022pure}
& Structural / edge encodings
& \cmark
& Dense Global Attention
& $\mathcal{O}\big((|V|+|E|)^2\big)$ \\

SpecFormer~\cite{bo2023specformer}
& Spectral features
& \xmark
& Dense Global Attention
& $\mathcal{O}(|V|^2)$ \\

DIFFormer~\cite{wu2023difformer}
& Diffusion kernels
& \xmark
& Dense Global Attention
& $\mathcal{O}(|V|^2)$ \\

Exphormer~\cite{shirzad2023exphormer}
& Subgraph encodings
& \xmark
& Sparse Attention
& $\mathcal{O}(|E|)$ \\

GRIT~\cite{ma2023grit}
& Learned structural tokens
& \xmark
& Dense Global Attention
& $\mathcal{O}(|V|^2)$ \\

ESA~\cite{buterez2025end}
& None
& \xmark
& Masked+Dense Global Attention
& $\mathcal{O}(|E|^2)$ \\

\midrule
\rowcolor{blue!8}
\textbf{Ours (HopFormer)}
& \textbf{None}
& \cmark
& \textbf{Sparse Attention Guided by $n$-Hop Masks}
& Flexible with $\Omega\big(|V| + |E|\big)$\\

\bottomrule
\end{tabular}
}
\end{table*}

Transformers~\cite{vaswani2017attention} have become a dominant architecture in modern machine learning due to their expressive self-attention mechanism and strong scalability with respect to data and model size. By relying on weak inductive biases, Transformers learn task-relevant structure directly from data, a property that has driven their success in domains characterized by long-range dependencies and heterogeneous interactions, including natural language processing~\cite{brown2020language} and vision~\cite{arnab2021vivit}.

Extending Transformers to graph-structured data has therefore attracted growing interest as an alternative to Message-Passing Graph Neural Networks (MPNNs)~\cite{gilmer2017neural}. While MPNNs leverage sparse local aggregation and achieve strong empirical performance, their strictly local propagation limits the ability to capture long-range dependencies and global structural patterns, leading to phenomena such as over-smoothing~\cite{oonograph} and information bottlenecks in deep architectures~\cite{topping2022understanding}. These limitations have motivated the adoption of Transformer-style attention to enable broader interaction ranges on graphs.

Applying Transformers to graphs, however, is fundamentally more complex than in sequential domains. Graphs are non-sequential, permutation-invariant structures~\cite{keriven2019universal} with heterogeneous information sources, including node attributes, edge attributes, and relational topology~\cite{hu2020heterogeneous}. As a result, most existing graph Transformer models modify the standard Transformer design to explicitly inject structural information. Common strategies include adding positional or structural encodings—such as shortest-path distances, centrality measures, Laplacian eigenvectors, diffusion kernels, or random-walk statistics—into attention logits or node embeddings~\cite{zhang2020graph,ying2021transformers,kreuzer2021rethinking,rampasek2022graphgps,ma2023grit}. Other approaches introduce architectural modifications, such as hybrid message-passing and attention blocks or auxiliary structural tokens~\cite{rampasek2022graphgps,ma2023grit,kim2022pure,wu2023difformer,buterez2025end}. A systematic comparison of these design choices is summarized in \autoref{tab:design_comparison}.

Despite their strong empirical performance, these developments raise fundamental questions about the necessity of prevailing graph Transformer design choices.
First, \textbf{is it truly necessary to inject graph structure through explicit positional or structural encodings, or through architectural modifications of the Transformer itself?} Such encodings are highly sensitive to their formulation, dimensionality, scaling, and point of integration~\cite{haviv2022transformer,zheng2024dape,wang2024length}, and they can impair generalization when test-time graph sizes or structural distributions differ from those observed during training~\cite{wang2024length}. In contrast, recent results in sequence modeling demonstrate that Transformers can recover both absolute and relative positional information using attention masking alone, even without explicit positional embeddings~\cite{haviv2022transformer,kazemnejad2023impact,wang2024length}. These findings suggest that structural inductive bias may be imposed implicitly through constrained attention patterns, without relying on handcrafted encodings or auxiliary architectural components.

Second, \textbf{is dense global self-attention essential for effective graph representation learning?} Although many graph Transformers retain global attention over all token pairs, real-world graphs frequently exhibit strong small-world properties~\cite{watts1998collective,zitin2014spatially}, where informative interactions are concentrated within a limited number of hops. Moreover, global attention incurs quadratic or worse computational cost. When both nodes $V$ and edges $E$ are treated as tokens—a common strategy for modeling heterogeneous graph information~\cite{ying2021transformers,kim2022pure}—the attention matrix scales as $(|V|+|E|)\times(|V|+|E|)$, leading to prohibitive $\mathcal{O}(|V|^4)$ complexity in dense graphs. This raises the question of whether dense global attention is necessary at all, or whether carefully structured, locality-aware attention mechanisms can suffice to capture both local and long-range dependencies.

To answer these questions, we propose HopFormer, a new graph Transformer paradigm that adopts the simplest design principles proven effective in sequential Transformer models. Our approach injects graph structure exclusively through \emph{head-specific $n$-hop masked sparse attention}, in which each attention head is restricted to a prescribed $n$-hop receptive field via attention masks, without relying on positional or structural encodings and without modifying the standard Transformer architecture. Importantly, these masks are applied \emph{prior} to attention computation, so self-attention is performed only on the masked support, yielding genuinely sparse attention rather than masked dense attention. By explicitly controlling the hop radius per head, our design exposes a direct and interpretable trade-off between receptive field size and computational cost. To jointly model node and edge information without auxiliary encodings, we employ a node–edge tokenization scheme based on an augmented incidence graph, treating edges as auxiliary tokens connected to their endpoints. Node and edge attributes are embedded using lightweight, modality-specific input projections into a shared latent space, following established practice in multimodal Transformers~\cite{deitke2025molmo,wu2025janus}. This formulation enables unified attention over heterogeneous graph components while preserving architectural simplicity.

Our theoretical analysis and extensive experiments demonstrate that explicit positional or structural encodings are not a prerequisite for effective graph Transformers. Instead, we show that graph structure can be fully and efficiently injected through head-specific $n$-hop masked sparse attention alone, without modifying the standard Transformer architecture. Moreover, our results reveal that dense global attention is often unnecessary: on graphs with strong small-world properties, carefully controlled local receptive fields not only suffice but yield more stable and consistently strong performance across datasets. Even on graphs with weaker small-world effects, global attention exhibits diminishing returns, with multiple Transformer variants converging to similar performance. Together, these findings establish that explicit, topology-aligned control of attention sparsity offers a principled and generalizable alternative to increasingly complex graph Transformer designs, achieving competitive accuracy with improved interpretability and computational efficiency.

\begin{figure*}[th]
    \centering
    \includegraphics[width=1.0\linewidth]{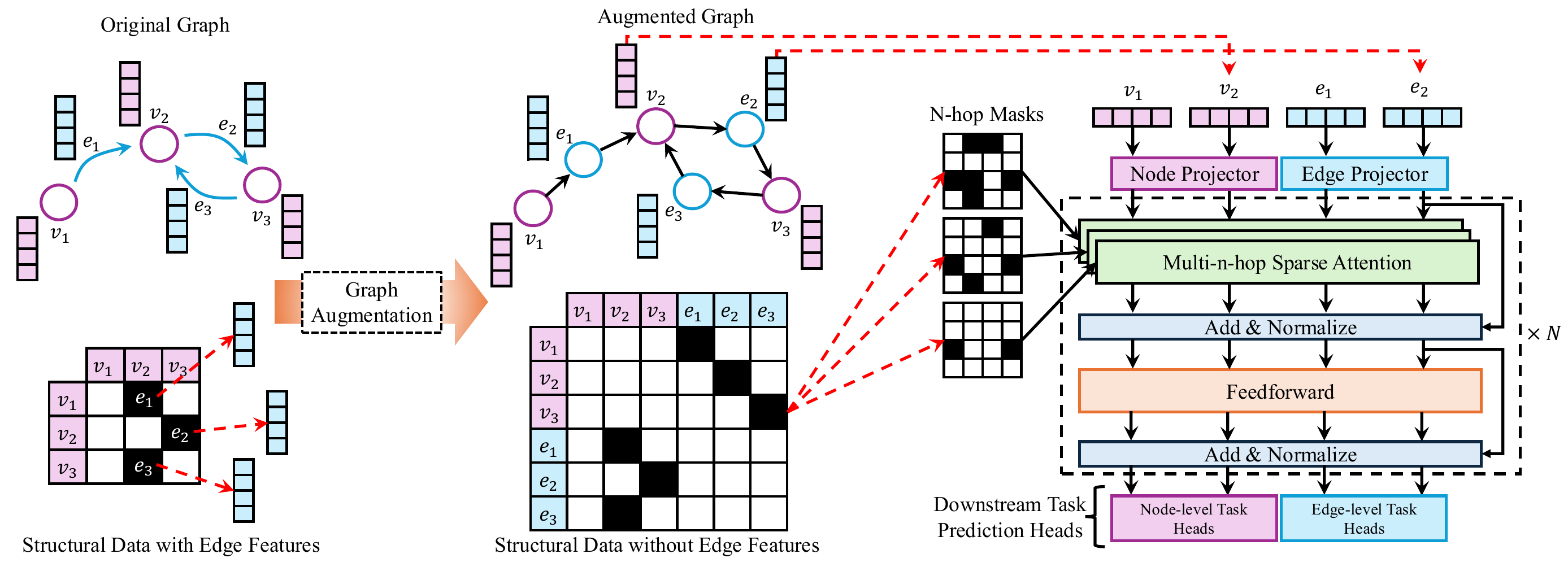}
    \caption{An illustration of the proposed edge-to-node augmentation, node–edge tokenization, and head-specific $n$-hop sparse attention mechanism.}
    \label{fig:overview}
\end{figure*}

\section{Related Work and Background}

\paragraph{Attention Masking and Structural Bias.}
Attention masking provides a principled mechanism for constraining information flow in Transformers without altering the attention operator itself. In sequence models, causal masking has been shown to implicitly induce positional structure, allowing Transformers to recover absolute and relative positional relationships even without explicit positional embeddings~\cite{haviv2022transformer,kazemnejad2023impact,wang2024length}. Motivated by these findings, we adopt masking as the sole mechanism for injecting structural inductive bias in graph Transformers. Specifically, we employ head-specific $n$-hop attention masks that bound each attention head’s receptive field according to graph topology, providing explicit and interpretable control over information propagation while preserving the standard self-attention formulation.

\paragraph{Sparse Attention and Sparse Computation.}
Sparse matrix operations enable efficient computation when interactions are inherently sparse, with time complexity proportional to the number of nonzero entries, i.e., $\mathcal{O}(\mathrm{nnz})$, rather than quadratic in problem size~\cite{yang2018design}.
In graph learning, MPNNs naturally leverage sparsity by formulating neighborhood aggregation as sparse matrix operations~\cite{gilmer2017neural,jiang2019semi}. However, many graph Transformers retain dense global self-attention over all $K$ tokens, incurring $\mathcal{O}(K^2)$ complexity regardless of graph sparsity. Even when attention masks are employed, dense attention scores are often computed prior to masking, resulting in masked dense attention rather than true sparsity. In contrast, our approach applies masks \emph{before} attention computation, enabling attention to be implemented directly via sparse operations with complexity scaling as $\mathcal{O}(\mathrm{nnz}(\mathbf{M}))$, where $\mathbf{M}$ denotes the attention mask.

\paragraph{Small-World Property.}
Given a graph $G=(V,E)$, where $d(u,v)$ denotes the shortest-path distance between nodes $u,v\in V$, many real-world graphs exhibit the \emph{small-world} property, characterized by a small average shortest-path length
\begin{equation}
\ell(G) = \frac{1}{|V|(|V|-1)} \sum_{u \neq v} d(u,v) = \mathcal{O}(\log |V|),
\end{equation}
together with a high clustering coefficient
\begin{equation}
C(G) = \frac{1}{|V|} \sum_{v \in V} \frac{2\,|\mathcal{N}(v)|}{\deg(v)\,(\deg(v)-1)},
\end{equation}
where $\mathcal{N}(v)$ denotes the set of edges between neighbors of $v$ and $\deg(v)$ is the degree of $v$.
Such properties are widely observed in social, biological, and information networks~\cite{watts1998collective,zitin2014spatially}.
They imply that informative interactions are typically concentrated within a small number of hops, rendering dense global attention unnecessary at every layer and motivating attention mechanisms that explicitly restrict receptive fields while retaining the ability to capture long-range dependencies.

\section{Methodology}

In this section, we introduce HopFormer, a graph Transformer that injects structure solely through head-specific attention masks while leaving the standard Transformer architecture intact (\autoref{fig:overview}). Our design enables explicit control over attention receptive fields, avoiding redundant global interactions on small-world graphs, and preserves sparse attention computation for improved efficiency—without relying on positional or structural encodings.

\subsection{Edge-to-Node Augmentation via Incidence Graph}

Let $G=(V,E)$ be a graph with $|V|=N$ nodes and $|E|=M$ edges. Each node $v\in V$ is associated with a feature vector $\mathbf{x}_v\in\mathbb{R}^{d_v}$, and each edge $e\in E$ may optionally carry features $\mathbf{e}_e\in\mathbb{R}^{d_e}$.
To enable a unified treatment of nodes and edges as tokens in a Transformer, we first convert $G$ into an \emph{augmented incidence graph} $\widetilde{G}=(\widetilde{V},\widetilde{E})$ by converting each edge into an \emph{edge node}. Concretely, let $\widetilde{V}=V\cup E$ where we identify each $e=(u,v)\in E$ with a new vertex also denoted $e$. We then connect each edge-node to its endpoint nodes:
\begin{equation}
\widetilde{E}=\{(u,e),(e,u),(v,e),(e,v)\mid e=(u,v)\in E\}.
\end{equation}
Thus each original edge becomes two undirected incidence links (or four directed links), and the augmented graph has $|\widetilde{V}|=N+M$ and $|\widetilde{E}|=\Theta(M)$. Importantly, this transformation preserves sparsity up to a constant factor and does not alter the asymptotic edge complexity of the original graph.

\subsection{Input Projectors for Node--Edge Tokenization}

Based on the augmented incidence graph $\widetilde{G}$, we represent the augmented node set $\widetilde{V}$ as a collection of tokens consisting of node tokens, corresponding to $V\cap\widetilde{V}$, and edge tokens, corresponding to $E\cap\widetilde{V}$. To accommodate heterogeneous feature spaces without modifying the Transformer layers, we employ lightweight \emph{input projectors} to map node and edge attributes into a shared $d$-dimensional token space:
\begin{equation}
\mathbf{h}_v=\mathbf{W}_n\mathbf{x}_v,\quad v\in V\cap\widetilde{V},
\qquad
\mathbf{h}_e=\mathbf{W}_e\mathbf{e}_e,\quad e\in E\cap\widetilde{V},
\end{equation}
with $\mathbf{W}_n\in\mathbb{R}^{d\times d_v}$ and $\mathbf{W}_e\in\mathbb{R}^{d\times d_e}$. If edge features are unavailable, we set $\mathbf{e}_e=\mathbf{0}$ so that edge tokens contribute only through topology. The resulting input token sequence is
\begin{equation}
\mathbf{H}^{(0)}=\big[\{\mathbf{h}_v\}_{v\in V\cap\widetilde{V}};\{\mathbf{h}_e\}_{e\in E\cap\widetilde{V}}\big]\in\mathbb{R}^{(N+M)\times d}.
\end{equation}
This construction treats nodes and edges symmetrically as tokens and avoids auxiliary modules (e.g., cross-attention) or separate update rules.

\subsection{Head-Specific $n$-Hop Sparse Attention}

We inject graph structure into the Transformer exclusively through \emph{head-specific attention masks} derived from the augmented incidence graph $\widetilde{G}$. These masks depend only on the connectivity encoded by $\widetilde{E}$ and do not incorporate edge attributes or any additional structural features, leaving all other components of the Transformer architecture unchanged.

Let $\widetilde{\mathbf{A}}\in\{0,1\}^{(N+M)\times(N+M)}$ denote the sparse adjacency matrix of $\widetilde{G}$. For each attention head $h\in\{1,\dots,H\}$, we assign a hop budget $n_h\in\mathbb{N}$, which defines the maximum receptive field of that head in terms of graph distance.

Conceptually, we define the head-specific attention mask as the $n_h$-hop reachability matrix
\begin{equation}\label{eq:defmask}
\mathbf{M}^{(h)} = \mathbb{I}\!\left[\sum_{k=0}^{n_h} \widetilde{\mathbf{A}}^{\,k} > 0\right]
\;\in\; \{0,1\}^{(N+M)\times(N+M)},
\end{equation}
where $\widetilde{\mathbf{A}}^{0}=\mathbf{I}$ and the indicator function is applied elementwise. An entry $\mathbf{M}^{(h)}_{ij}=1$ indicates that token $j$ is reachable from token $i$ within at most $n_h$ incidence hops in $\widetilde{G}$.

Given token embeddings $\mathbf{H}\in\mathbb{R}^{(N+M)\times d}$, attention head $h$ computes scaled dot-product attention restricted to the support of $\mathbf{M}^{(h)}$:
\begin{equation}\label{eq:dotproduct}
\mathrm{Attn}^{(h)}(\mathbf{Q},\mathbf{K},\mathbf{V})
=
\mathrm{softmax}\!\left(
\frac{ (\mathbf{Q}\mathbf{K}^\top)_{\mathbf{M}^{(h)}} }{\sqrt{d_h}}
\right)\mathbf{V},
\end{equation}
where $d_h=d/H$ and $(\cdot)_{\mathbf{M}^{(h)}}$ denotes restriction to the nonzero entries specified by the mask. Equivalently, attention scores are computed \emph{only} for token pairs whose graph distance is at most $n_h$, while all other interactions are excluded from both computation and normalization.

\paragraph{Sparse attention implementation.}
Because attention is restricted \emph{prior} to score computation, the resulting operation constitutes genuine sparse attention rather than masked dense attention. The number of attention interactions per head is determined by the size of the $n_h$-hop neighborhoods in $\widetilde{G}$ and therefore depends on graph topology and hop budget, rather than on the total number of tokens $(N+M)$. This allows attention to be implemented directly over sparse supports, with computational cost proportional to $\mathrm{nnz}(\mathbf{M}^{(h)})\cdot d_h$. Consequently, each head explicitly trades receptive field size for computational cost through its hop budget $n_h$, yielding interpretable and topology-aligned control over attention.

\subsection{Overall Transformer Architecture}

Our model follows the standard Transformer encoder architecture~\cite{vaswani2017attention} without modifying its layer structure, residual connections, normalization, or feed-forward networks. Graph structure is injected \emph{exclusively} through the multi-head self-attention module by replacing dense global attention with head-specific masked attention, while all other components remain identical to the vanilla Transformer.

Let $\mathbf{Z}^{(\ell)}\in\mathbb{R}^{T\times d}$ denote the token representations at layer $\ell$, where $T=N+M$. Each layer is defined as
\begin{align}
\widetilde{\mathbf{Z}}^{(\ell)}
&=
\underbrace{\mathrm{MHSA}_{\{\mathbf{M}^{(h)}\}_{h=1}^H}\!\left(\mathbf{Z}^{(\ell)}\right)}_{\textbf{only difference from vanilla Transformer}}
+\mathbf{Z}^{(\ell)}, \label{eq:masked_block}\\
\mathbf{Z}^{(\ell+1)}
&=
\mathrm{FFN}\!\left(\widetilde{\mathbf{Z}}^{(\ell)}\right)
+\widetilde{\mathbf{Z}}^{(\ell)}. \label{eq:ffn_block}
\end{align}

Here, $\mathrm{MHSA}_{\{\mathbf{M}^{(h)}\}_{h=1}^H}$ denotes masked multi-head self-attention, in which each head $h$ computes attention using the head-specific $n_h$-hop mask $\mathbf{M}^{(h)}$ via the masked attention operator defined in \autoref{eq:dotproduct}. Apart from the inclusion of these masks, the attention computation and head aggregation are identical to those in the standard Transformer~\cite{vaswani2017attention}.

\subsection{Training with Downstream Tasks}

Let $\{\mathbf{h}^{(L)}_i\}=\mathbf{H}^{(L)} \in \mathbb{R}^{(N+M)\times d}$ denote the output token representations from the final Transformer layer.  
For node-level tasks, predictions are obtained by applying a shared classifier $f_{\mathrm{node}}$ to node tokens:
\begin{equation}
\hat{\mathbf{y}}_v = f_{\mathrm{node}}\!\left(\mathbf{h}^{(L)}_v\right), \quad v \in V\cap\widetilde{V},
\end{equation}
and the model is trained using standard node-wise losses, such as cross-entropy for classification.

For graph-level tasks, a permutation-invariant readout function $\rho(\cdot)$ aggregates token representations into a single graph representation:
\begin{equation}
\mathbf{h}_G = \rho\!\left(\{\mathbf{h}^{(L)}_v\}_{v\in V\cap\widetilde{V}} \cup \{\mathbf{h}^{(L)}_e\}_{e\in E\cap\widetilde{V}}\right),
\end{equation}
which is then mapped to the prediction $\hat{\mathbf{y}}_G = f_{\mathrm{graph}}(\mathbf{h}_G)$ using a task-specific head. Typical choices for $\rho$ include sum or mean pooling.

All parameters, including input projectors and Transformer layers, are optimized jointly via backpropagation by minimizing the downstream loss. Since graph structure is incorporated solely through head-specific attention masks, the training procedure is identical to that of a standard Transformer, while yielding topology-aware and sparsity-preserving representations.

\section{Theoretical Analysis}

We provide a theoretical analysis of the proposed HopFormer to justify two central claims:
(i) injecting graph structure solely through head-specific attention masks is sufficient to convey structural information to the Transformer, and
(ii) equipping a single Transformer layer with multiple heads of different receptive fields strictly increases its expressiveness.

\paragraph{Assumption 1 (Full-rank projections).}
Following standard assumptions in expressiveness analyses of attention-based models
(e.g.,~\cite{yun2019transformers,cordonnier2019relationship}),
we assume that the value projection matrices
$\mathbf{W}_V^{(h)}\in\mathbb{R}^{d_h\times d}$ for all heads $h\in\{1,\dots,H\}$
are full rank, i.e.,
\begin{equation}
\mathrm{rank}(\mathbf{W}_V^{(h)}) = d_h,
\qquad d_h = d/H.
\end{equation}
This ensures that no information loss is introduced by linear value projections.

\paragraph{Assumption 2 (Injective post-attention mapping).}
Let $\phi(\cdot)$ denote the composition of the output projection, residual connection,
and any subsequent nonlinearity (e.g., feed-forward network).
We assume $\phi$ is injective:
\begin{equation}
\phi(\mathbf{z}_1)=\phi(\mathbf{z}_2)\ \Rightarrow\ \mathbf{z}_1=\mathbf{z}_2.
\end{equation}
This assumption is commonly adopted to isolate the expressive power of the attention
mechanism itself~\cite{yun2019transformers}.

\paragraph{Assumption 3 (Sparse topology and bounded hops).}
The augmented graph $\widetilde{G}$ is sparse, with $|\widetilde{E}|=\mathcal{O}(M)$,
and hop budgets $\{n_h\}_{h=1}^H$ are bounded by a small constant independent of $N$ and $M$.

\paragraph{Assumption 4 (Distinct receptive fields).}
There exist at least two heads $h\neq h'$ such that $n_h\neq n_{h'}$.

\subsection{Sufficiency of Structural Information Injection via Masked Attention}

We first show that attention masks derived from $\widetilde{G}$ alone are sufficient to
inject graph structure into the Transformer layer.

\begin{theorem}[Topology-Constrained Information Flow]
\label{thm:topology_flow}
For an attention head $h$ with hop budget $n_h$, the output representation of any token
$i$ after a single attention layer depends exclusively on tokens within the $n_h$-hop
neighborhood of $i$ in the augmented incidence graph $\widetilde{G}$.
\end{theorem}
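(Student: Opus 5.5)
The plan is to unfold the masked attention operator in \autoref{eq:dotproduct} row by row and read off which inputs row $i$ of the output can depend on. Write $\mathbf{Q}=\mathbf{H}\mathbf{W}_Q^{(h)\top}$, $\mathbf{K}=\mathbf{H}\mathbf{W}_K^{(h)\top}$, $\mathbf{V}=\mathbf{H}\mathbf{W}_V^{(h)\top}$. Since these projections act token-wise, $\mathbf{q}_i$, $\mathbf{k}_j$ and $\mathbf{v}_j$ depend only on $\mathbf{h}_i$ or $\mathbf{h}_j$. Let $\mathcal{N}_{n_h}(i)=\{j:\mathbf{M}^{(h)}_{ij}=1\}$. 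By \autoref{eq:defmask}, this is exactly the set of tokens $j$ for which some power $\widetilde{\mathbf{A}}^k$ with $0\le k\le n_h$ has a positive $(i,j)$ entry. Because $\widetilde{\mathbf{A}}$ is a nonnegative $0/1$ matrix, $(\widetilde{\mathbf{A}}^k)_{ij}$ counts walks of length $k$ from $i$ to $j$. It is therefore positive iff such a walk exists. So $\mathcal{N}_{n_h}(i)=\{j: d_{\widetilde{G}}(i,j)\le n_h\}$, and it contains $i$ because of the $k=0$ term.

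Next I write the head output explicitly as
\[
\mathrm{Attn}^{(h)}_i=\sum_{j\in\mathcal{N}_{n_h}(i)}\alpha^{(h)}_{ij}\mathbf{v}_j,\qquad
\alpha^{(h)}_{ij}=\frac{\exp(\mathbf{q}_i^\top\mathbf{k}_j/\sqrt{d_h})}{\sum_{l\in\mathcal{N}_{n_h}(i)}\exp(\mathbf{q}_i^\top\mathbf{k}_l/\sqrt{d_h})}.
\]
This uses the convention stated after \autoref{eq:dotproduct}: scores outside the support are neither computed nor included in the normalization. Both the numerators and the normalizer are functions only of $\mathbf{h}_i$ and $\{\mathbf{h}_l\}_{l\in\mathcal{N}_{n_h}(i)}$. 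Hence the whole row is a function $F_i(\mathbf{h}_i,\{\mathbf{h}_j\}_{j\in\mathcal{N}_{n_h}(i)})$. To state this rigorously, I show that perturbing any $\mathbf{h}_m$ with $m\notin\mathcal{N}_{n_h}(i)$ leaves every term unchanged. Equivalently, the Jacobian block $\partial\,\mathrm{Attn}^{(h)}_i/\partial\mathbf{h}_m$ is zero.

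Finally, I extend the claim to the token's post-attention representation. Per \autoref{eq:masked_block} and \autoref{eq:ffn_block}, the output projection, the residual $+\mathbf{Z}^{(\ell)}_i$, and the FFN all act token-wise. They therefore add dependence only on $\mathbf{h}_i$ itself, which already lies in the neighborhood. If the statement is read across heads, the concatenation over heads depends on $\bigcup_h\mathcal{N}_{n_h}(i)=\mathcal{N}_{\max_h n_h}(i)$. For the per-head claim, I isolate head $h$'s contribution before the output projection mixes heads.

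The main obstacle is not technical depth but precision about what ``depends exclusively'' means. I must make sure the softmax normalizer is taken over the restricted support; under additive $-\infty$ masking, this is equivalent, since the masked terms contribute $\exp(-\infty)=0$. I must also state carefully that the output projection mixes heads, so the per-head statement refers to the head's own output row. I will also note that the walk-counting argument requires nonnegativity of $\widetilde{\mathbf{A}}$ so that no cancellations occur.
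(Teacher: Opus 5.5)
Your proposal is correct and follows essentially the same route as the paper: expand head $h$'s output at token $i$ as $\sum_{j:\mathbf{M}^{(h)}_{ij}=1}\alpha^{(h)}_{ij}\mathbf{W}_V^{(h)}\mathbf{h}_j$, with the softmax normalization restricted to the mask support, and observe that only tokens in $\mathcal{N}_{n_h}(i)$ enter. Your extra details sharpen that argument without changing it: the walk-counting argument identifies the support of $\mathbf{M}^{(h)}$ with the distance-$n_h$ ball (the paper simply asserts this ``by construction''), you check explicitly that the normalizer, and hence the weights $\alpha^{(h)}_{ij}$, involve only neighborhood tokens, and you note that the output projection mixes heads, so the per-head claim refers to $\mathbf{z}^{(h)}_i$.
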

\noindent (\textit{Proof}: Appendix~\ref{pro:topology_flow})

\begin{corollary}[Sufficiency of Mask-Based Structure Injection]
\label{cor:sufficiency}
Even without positional or structural encodings, the Transformer layer is explicitly
graph-aware, as all information propagation is constrained by the topology of
$\widetilde{G}$ encoded in the attention masks.
\end{corollary}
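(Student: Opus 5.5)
The plan is to derive Corollary~\ref{cor:sufficiency} directly from Theorem~\ref{thm:topology_flow}, by tracking how dependencies propagate through one full layer defined by \autoref{eq:masked_block}--\autoref{eq:ffn_block}. Recall that Theorem~\ref{thm:topology_flow} says the output of head $h$ at token $i$ is a function only of $\{\mathbf{z}_j : \mathbf{M}^{(h)}_{ij}=1\}$, i.e.\ of tokens within $n_h$ hops of $i$ in $\widetilde{G}$. I make ``explicitly graph-aware'' precise as follows: there is a set $\mathcal{N}_{\max}(i)$, determined only by $\widetilde{\mathbf{A}}$, such that the layer output at $i$ is a function of $\{\mathbf{z}_j\}_{j\in\mathcal{N}_{\max}(i)}$ alone, and this set generally differs from the full token set.

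\textbf{Step 1: multi-head aggregation.} The multi-head output at token $i$ is $\mathbf{W}_O[\mathrm{Attn}^{(1)}_i;\dots;\mathrm{Attn}^{(H)}_i]$. By the theorem, this is a function of the tokens in $\bigcup_h \mathcal{N}_{n_h}(i) = \mathcal{N}_{n_{\max}}(i)$, where $n_{\max}=\max_h n_h$. The equality holds because the reachability masks \autoref{eq:defmask} are monotone in $n_h$.

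\textbf{Step 2: residual and FFN.} The residual term adds only $\mathbf{z}_i$, and $i\in\mathcal{N}_{n_{\max}}(i)$ since $\widetilde{\mathbf{A}}^0=\mathbf{I}$. The FFN and its residual act tokenwise, so they introduce no cross-token dependence. Hence $\mathbf{Z}^{(\ell+1)}_i$ depends only on $\{\mathbf{Z}^{(\ell)}_j\}_{j\in\mathcal{N}_{n_{\max}}(i)}$, a set fixed by $\widetilde{G}$ alone. No positional encoding enters anywhere.

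\textbf{Step 3: the dependence genuinely reflects topology.} Some tokens are excluded, but that alone is not enough: I must show the restriction actually carries structural information rather than being vacuous. For this I use Assumptions 1--2. First pick two graphs $\widetilde{G}\neq\widetilde{G}'$ on the same token set and features that differ in some $n_h$-hop neighborhood of a token $i$. Then choose features so that the masked softmax averages over the two supports differ; for instance, a token $j$ lies in one support but not the other, and its value vector $\mathbf{W}_V^{(h)}\mathbf{z}_j$ is distinct from the rest. Full rank of $\mathbf{W}_V^{(h)}$ ensures such features exist, and injectivity of $\phi$ carries the difference to the layer output. So the layer's input--output map is a nontrivial function of $\widetilde{\mathbf{A}}$.

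\textbf{Main obstacle.} Step 3 is the only real difficulty. Steps 1--2 are routine bookkeeping once the theorem is taken as given. The subtle part of Step 3 is that softmax averages could in principle coincide across different supports. I would handle this with a simple explicit construction: take constant keys so the attention is uniform, which makes the output the mean of the value vectors over the support, and then choose a feature at $j$ that shifts that mean.
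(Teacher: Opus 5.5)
Your argument is correct. Its core is the paper's route, but you prove more than the paper does. The paper gives no separate proof of Corollary~\ref{cor:sufficiency}; it is read off from Theorem~\ref{thm:topology_flow}, whose proof only shows that a single head's output $\mathbf{z}^{(h)}_i$ is a weighted sum over $\mathcal{N}_{n_h}(i)$.

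Your Steps~1--2 carry out the passage from one head to the whole layer of \autoref{eq:masked_block}--\autoref{eq:ffn_block}, which the paper leaves implicit. The masks are nested, so their union is the $n_{\max}$-hop mask, and the residual, FFN and normalization act tokenwise. This is what the word ``layer'' in the corollary actually requires.

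Step~3 has no counterpart in the paper, which simply equates ``constrained by the masks'' with ``graph-aware.'' Your separation argument rules out the vacuous reading. It also shows where a hypothesis is needed: some $n_h\ge 1$, since $n_h=0$ gives $\mathbf{M}^{(h)}=\mathbf{I}$ for every graph. Note that it is a cross-graph statement. On a fixed connected $\widetilde{G}$ with $\min_h n_h\ge\operatorname{diam}(\widetilde{G})$, every mask is all-ones. The layer then coincides with an unmasked, encoding-free layer and no longer depends on $\widetilde{\mathbf{A}}$, so the ``generally'' in your definition is doing real work.

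The ``constant keys'' device in Step~3 is more than you need. With the parameters fixed, it requires features with $\mathbf{z}_j-\mathbf{z}_k\in\ker\mathbf{W}_K^{(h)}\setminus\ker\mathbf{W}_V^{(h)}$, which need not exist (e.g.\ $\mathbf{W}_K^{(h)}=\mathbf{W}_V^{(h)}$). A simpler construction works for any attention pattern:
\begin{itemize}
\item set $\mathbf{z}_k=\mathbf{0}$ for $k\neq j$, and pick $\mathbf{z}_j$ with $\mathbf{W}_V^{(h)}\mathbf{z}_j\neq\mathbf{0}$ (possible under Assumption~1);
\item on the support containing $j$, head $h$ returns $\alpha^{(h)}_{ij}\mathbf{W}_V^{(h)}\mathbf{z}_j\neq\mathbf{0}$, since softmax weights are strictly positive;
\item on the other support it returns $\mathbf{0}$;
\item the residual term $\mathbf{z}_i$ is the same in both cases, so Assumption~2 carries the difference to the layer output.
\end{itemize}
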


\subsection{Expressiveness of Multi-$n$-Hop Heads}

We now establish that multiple heads with distinct hop budgets increase the expressive
power of a single Transformer layer.

\begin{theorem}[Strict Expressiveness Gain of Multi-$n$-Hop Heads]
\label{thm:multihead_expressiveness}
Under Assumptions~1--4, a Transformer layer with multiple attention heads having distinct
hop budgets $\{n_h\}_{h=1}^H$ is strictly more expressive than any layer in which all heads
share a common hop budget.
\end{theorem}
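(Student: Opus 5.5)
The plan is to prove the theorem in two parts: an inclusion and a separation. Fix a multi-hop layer whose hop budgets $\{n_h\}_{h=1}^H$ satisfy Assumption~4. Let $n_{\min}=\min_h n_h$ and $n_{\max}=\max_h n_h$, with $n_{\min}<n_{\max}$. The comparison class is every layer with the same number of heads $H$ and a common budget $n$, for arbitrary $n$.

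\emph{Inclusion.} I would argue the multi-hop layer loses nothing relative to a uniform layer. By \eqref{eq:defmask}, $\mathbf{M}^{(h)}\le\mathbf{M}^{(h')}$ entrywise whenever $n_h\le n_{h'}$. A head with a larger mask can emulate a smaller one by driving the query--key logits outside the smaller support to $-\infty$ in the limit. Because masks depend only on $\widetilde{\mathbf{A}}$ and not on features, this needs more than simple logit growth. I would use the standard trick of giving the head enough query/key dimensions to realize a hard-attention pattern, following the cited constructions of~\cite{yun2019transformers,cordonnier2019relationship}. Within that convention I will claim the multi-hop layer can emulate any uniform layer with budget $n\le n_{\max}$ on the relevant heads.

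\emph{Separation.} This is the main step. I would construct two token configurations on the augmented graph that any uniform-budget layer maps identically at some token, while the multi-hop layer separates them. I would split on the common budget $n$.

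\begin{itemize}
\item \textbf{Case $n\ge n_{\max}$.} Every head then sees a superset of the tokens within $n_{\max}$ hops. I would use a path-like $\widetilde{G}$ and choose features so that one token at distance exactly $n_{\min}+1$ from $i$ differs between the two inputs. Under Assumptions~1--2, the multi-hop layer detects this difference in a head of budget $n_{\max}$. It also keeps a second head, of budget $n_{\min}$, whose output at $i$ is unaffected, so the pair of head outputs jointly encodes ``the change lies in the annulus between $n_{\min}$ and $n_{\max}$.''

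The uniform layer must reproduce the map $X\mapsto(\text{head}_1,\dots,\text{head}_H)$ on a family of inputs. Each of its heads is a softmax average over the same $n$-ball, and I would argue by a counting/linearity argument that it cannot realize such a localized-annulus function. The idea is to vary one feature and show that the response of every uniform head at $i$ is coupled across radii.
\item \textbf{Case $n<n_{\max}$.} This case is easier: by Theorem~\ref{thm:topology_flow}, the uniform layer's output at $i$ is invariant to tokens at distance in $(n,n_{\max}]$. The multi-hop head of budget $n_{\max}$ depends on such a token nontrivially, by Assumptions~1--2 and generic weights. Injectivity of $\phi$ then ensures this dependence survives to the layer output.
\end{itemize}

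I expect the case $n\ge n_{\max}$ to be the obstacle. A large uniform budget is not obviously weaker, because learned attention can mimic masking. To make the strict gain rigorous, I would likely have to restrict the comparison: either fix the attention parameterization so that logits cannot emulate exact masks, or count cost under Assumption~3. In the latter reading, the multi-hop layer attains the functionality with $\sum_h \mathrm{nnz}(\mathbf{M}^{(h)})$ interactions, strictly fewer than a uniform $n_{\max}$ layer needs. My first attempt would be to formalize ``expressive'' as the class of exactly realizable functions with exact (non-limiting) softmax. That class lets me show that a uniform head cannot place exactly zero weight on in-mask tokens, which yields the annulus separation.
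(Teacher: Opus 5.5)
Your two separation cases match the paper's proof: a smaller common budget misses a token that the $n_{\max}$-head sees, and a larger one ``aggregates additional nodes''. The gap is the inclusion step you add on top, which fails and cannot be repaired.

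\textbf{Why inclusion fails.} In one layer the logit of a pair $(i,j)$ is $(\mathbf{W}_Q\mathbf{h}_i)^\top\mathbf{W}_K\mathbf{h}_j$, a function of the features alone. With no positional encodings, take one token in $\mathcal{N}_n(i)$ and one in $\mathcal{N}_{n_{\max}}(i)\setminus\mathcal{N}_n(i)$ with equal features: they always receive equal weight. So no number of query/key dimensions lets a larger-mask head emulate a smaller mask, even in the limit. The constructions you cite get around this only through positional encodings. Separately, heads with $n_h<n$ cannot see distance-$n$ tokens at all, which ``on the relevant heads'' glosses over.

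\textbf{Containment is in fact false.}
\begin{itemize}
\item For $n>n_{\max}$, Theorem~\ref{thm:topology_flow} makes the multi-hop output at $i$ independent of a token at distance in $(n_{\max},n]$. A uniform-$n$ layer with $\mathbf{W}_Q=\mathbf{W}_K=\mathbf{0}$ (mean aggregation) depends on that token through its full-rank $\mathbf{W}_V$ and injective $\phi$.
\item For $n=n_{\max}$, a token at distance $n_{\max}$ enters the multi-hop output only through the at most $H-1$ heads of budget $n_{\max}$. That is a continuous map into $\mathbb{R}^{(H-1)d_h}$ with $(H-1)d_h<d$, so it cannot be injective in that token's feature. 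A uniform mean-aggregation layer whose value maps stack to an invertible $d\times d$ matrix is injective in it.
\end{itemize}
So ``strictly contains'' is unprovable by any route. Only the one-sided claim is provable: for every $n$, the multi-hop layer realizes some function that no uniform-$n$ layer realizes. The paper's proof has the same hole; it gives only the separation argument and then asserts containment. Your two halves also use incompatible semantics: limiting softmax for inclusion, exact softmax for separation.

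\textbf{The separation itself.} Your case $n<n_{\max}$ is fine. For $n\ge n_{\max}$ you have misdiagnosed the obstacle. Single-layer attention cannot detect graph distance, so learned logits cannot mimic masking. You therefore need neither exact softmax, nor an unspecified counting argument, nor a cost-based reformulation. Exact softmax alone would not separate anyway, because the multi-hop output also depends on annulus tokens through its $n_{\max}$-head.

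The missing idea is permutation symmetry inside a mask:
\begin{itemize}
\item Take $j\in\mathcal{N}_{n_{\min}}(i)\setminus\{i\}$ and $j'\in\mathcal{N}_{n_{\max}}(i)\setminus\mathcal{N}_{n_{\min}}(i)$, and swap $\mathbf{h}_j$ and $\mathbf{h}_{j'}$.
\item Each head of a uniform layer with $n\ge n_{\max}$ sees the same query and the same multiset of key--value pairs at $i$, and $\mathbf{h}_i$ is untouched. Hence the uniform layer's output at $i$ is unchanged.
\item In the multi-hop layer, let head $h$ have $n_h=n_{\min}$ and $\mathbf{W}_Q=\mathbf{W}_K=\mathbf{0}$. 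Its output at $i$ shifts by $\mathbf{W}_V^{(h)}(\mathbf{h}_{j'}-\mathbf{h}_j)/|\mathcal{N}_{n_{\min}}(i)|$, which is nonzero for suitable features by Assumption~1.
\item Injectivity of $\phi$ carries this change to the layer output.
\end{itemize}
This is the rigorous content of the paper's ``yielding a different function'', and it holds for exact softmax and under limits alike.
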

\noindent (\textit{Proof}: Appendix~\ref{pro:multihead_expressiveness})

\begin{corollary}[Multi-Scale Representation in a Single Layer]
\label{cor:multiscalerep}
A single Transformer layer with heterogeneous hop budgets can simultaneously model
local and longer-range dependencies, enabling multi-scale graph representations
without increasing depth.
\end{corollary}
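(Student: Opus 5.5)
The corollary follows by combining Theorem~\ref{thm:topology_flow}, applied head by head, with the structure of multi-head concatenation; Theorem~\ref{thm:multihead_expressiveness} is only needed to argue that the multi-scale behavior is genuine rather than redundant. Fix a single layer with heads $h_{\mathrm{loc}}$ and $h_{\mathrm{glob}}$ such that $n_{h_{\mathrm{loc}}}<n_{h_{\mathrm{glob}}}$, as guaranteed by Assumption~4. Write the masked MHSA output for token $i$ as
\[
\mathbf{W}_O\big[\mathbf{o}^{(1)}_i;\dots;\mathbf{o}^{(H)}_i\big],\qquad \mathbf{o}^{(h)}_i=\sum_{j:\,\mathbf{M}^{(h)}_{ij}=1}\alpha^{(h)}_{ij}\,\mathbf{W}_V^{(h)}\mathbf{z}_j .
\]
Here the attention weights are normalized only over the support of $\mathbf{M}^{(h)}$, as in \autoref{eq:dotproduct}.

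The first step applies Theorem~\ref{thm:topology_flow} to each head separately. The output $\mathbf{o}^{(h_{\mathrm{loc}})}_i$ depends only on tokens within $n_{h_{\mathrm{loc}}}$ hops of $i$ in $\widetilde{G}$, so it is a purely local summary. The output $\mathbf{o}^{(h_{\mathrm{glob}})}_i$ ranges over the strictly larger set of tokens within $n_{h_{\mathrm{glob}}}$ hops, which includes tokens that $h_{\mathrm{loc}}$ cannot see. The masks are nested, since $\mathbf{M}^{(h)}$ in \autoref{eq:defmask} is monotone in $n_h$, and the two supports genuinely differ whenever $\widetilde{G}$ contains a node at distance in $(n_{h_{\mathrm{loc}}},n_{h_{\mathrm{glob}}}]$ from $i$.

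The second step shows that both kinds of dependency survive into the layer output. By Assumption~1 each $\mathbf{W}_V^{(h)}$ is full rank, so neither head's aggregate is collapsed by its value map. The heads also occupy disjoint coordinate blocks of the concatenation, so their outputs are not mixed before $\mathbf{W}_O$. By Assumption~2 the map $\phi$ combining output projection, residual connection and FFN is injective. Hence the concatenated vector $[\mathbf{o}^{(h_{\mathrm{loc}})}_i;\mathbf{o}^{(h_{\mathrm{glob}})}_i]$, which carries both the local and the longer-range aggregate, can be recovered from $\mathbf{Z}^{(\ell+1)}_i$. The layer output therefore encodes the local and the longer-range dependency at the same time, within one layer.

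The third step establishes the claim ``without increasing depth.'' A homogeneous layer with budget $n_{h_{\mathrm{loc}}}$ cannot reach the $n_{h_{\mathrm{glob}}}$-hop tokens at all, by Theorem~\ref{thm:topology_flow}; matching that receptive field would require stacking roughly $\lceil n_{h_{\mathrm{glob}}}/n_{h_{\mathrm{loc}}}\rceil$ layers. A homogeneous layer with budget $n_{h_{\mathrm{glob}}}$ in every head has no head whose support is confined to the local neighborhood. Theorem~\ref{thm:multihead_expressiveness} states that such a layer is strictly less expressive than the heterogeneous one, so the separated local signal is not available from it in general.

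The main obstacle is making ``model local dependencies'' precise enough that the last point is more than a restatement of Theorem~\ref{thm:multihead_expressiveness}. I would phrase it as the existence of a coordinate block of the layer output that is invariant to perturbations of tokens outside the $n_{h_{\mathrm{loc}}}$-hop ball, which follows directly from Theorem~\ref{thm:topology_flow}. A complementary block is sensitive to perturbations in the annulus between $n_{h_{\mathrm{loc}}}$ and $n_{h_{\mathrm{glob}}}$ hops; this holds for generic weights because the softmax weights are strictly positive on the support. This perturbation formulation avoids any heavier functional-analytic argument.
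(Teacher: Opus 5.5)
Your proposal is correct and follows essentially the paper's route. The paper gives no separate proof and reads the corollary off Theorem~\ref{thm:multihead_expressiveness}, whose argument matches yours (Theorem~\ref{thm:topology_flow} head by head, distinct neighborhood scales held in separate blocks of the concatenated head outputs, and Assumptions~1--2 carrying them through), while your $\lceil n_{h_{\mathrm{glob}}}/n_{h_{\mathrm{loc}}}\rceil$ layer count makes the ``without increasing depth'' clause more explicit than the paper does.

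One small fix to your perturbation formulation: the block that is invariant to tokens outside the $n_{h_{\mathrm{loc}}}$-hop ball lives in $[\mathbf{o}^{(1)}_i;\dots;\mathbf{o}^{(H)}_i]$, not in any coordinate block of $\mathbf{Z}^{(\ell+1)}_i$ (since $\mathbf{W}_O$ and the FFN mix heads). State it there and pass to the layer output through the injectivity of $\phi$, as you already do in your second step.
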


Together, \autoref{thm:topology_flow} and its corollary demonstrate that head-specific attention masks alone are sufficient to inject graph structure into a Transformer layer. Furthermore, \autoref{thm:multihead_expressiveness} and its corollary establish that employing multiple heads with distinct hop budgets strictly enhances expressiveness by enabling multi-scale neighborhood aggregation within a single layer. Collectively, these results provide a principled justification for HopFormer as a sparse, topology-aware Transformer that attains increased expressive power without auxiliary encodings or architectural modifications beyond masked self-attention.

\section{Experiments}

\begin{table*}[t]
\centering
\caption{
Test accuracy ($\uparrow$) on node-level benchmark datasets, reported as mean$_{\pm \text{std}}$ over 3 runs.
``$\mathrm{OOM}$'' denotes configurations that resulted in out-of-memory errors for all tested hyperparameter settings.
The \best{best} and \second{second-best} results for each dataset are highlighted.
}
\label{tab:table2}
\resizebox{\textwidth}{!}{
\begin{tabular}{lccccccccc}
\toprule
\textbf{Model} & \textbf{Cora} & \textbf{Citeseer} & \textbf{Pubmed} & \textbf{Chameleon} & \textbf{Squirrel} & \textbf{Actor} & \textbf{Cornell} & \textbf{Texas} & \textbf{Wisconsin} \\
\midrule
\multicolumn{10}{l}{\textbf{Message Passing Neural Networks}} \\
GCN   & $0.7180_{\pm0.0298}$ & $0.6173_{\pm0.0177}$ & $0.7150_{\pm0.0297}$ & $0.4415_{\pm0.0169}$ & $0.2917_{\pm0.0094}$ & $0.2713_{\pm0.0089}$ & $0.3243_{\pm0.0441}$ & $0.4505_{\pm0.0709}$ & $0.4379_{\pm0.0515}$ \\
GAT   & $0.7460_{\pm0.0116}$ & $0.6123_{\pm0.0162}$ & $0.7590_{\pm0.0184}$ & $0.4613_{\pm0.0021}$ & $0.3016_{\pm0.0086}$ & $0.2842_{\pm0.0121}$ & $0.3513_{\pm0.0441}$ & $0.3604_{\pm0.1469}$ & $0.3791_{\pm0.0606}$ \\
APPNP & $0.7713_{\pm0.0101}$ & $0.6617_{\pm0.0266}$ & \best{0.7750$_{\pm0.0046}$} & $0.4956_{\pm0.0100}$ & $0.3333_{\pm0.0088}$ & $0.2798_{\pm0.0017}$ & $0.4414_{\pm0.0127}$ & $0.5405_{\pm0.0000}$ & $0.4641_{\pm0.0092}$ \\
\midrule
\multicolumn{10}{l}{\textbf{Graph Transformers}} \\
Graphtransformer & \best{0.7913$_{\pm0.0052}$} & \second{0.6717$_{\pm0.0062}$} & $0.7420_{\pm0.0099}$ & $0.4006_{\pm0.0055}$ & $0.2767_{\pm0.0077}$ & $0.2873_{\pm0.0006}$ & $0.3874_{\pm0.0127}$ & $0.6036_{\pm0.0255}$ & $0.5229_{\pm0.0462}$ \\
Graphormer       & $0.4323_{\pm0.0455}$ & $0.4360_{\pm0.0140}$ & $\mathrm{OOM}$        & $0.5453_{\pm0.0136}$ & \second{0.4310$_{\pm0.0120}$} & $0.3713_{\pm0.0057}$ & $0.7117_{\pm0.0127}$ & $0.7748_{\pm0.0127}$ & $0.7582_{\pm0.0244}$ \\
SAN              & $0.5403_{\pm0.0540}$ & $\mathrm{OOM}$        & $\mathrm{OOM}$        & $0.4189_{\pm0.0267}$ & $\mathrm{OOM}$        & $\mathrm{OOM}$        & $0.6667_{\pm0.0337}$ & $0.6667_{\pm0.0337}$ & $0.7843_{\pm0.0160}$ \\
GPS              & $0.7150_{\pm0.0051}$ & $0.6227_{\pm0.0246}$ & $0.7327_{\pm0.0184}$ & $0.4905_{\pm0.0092}$ & $0.3535_{\pm0.0062}$ & $0.3721_{\pm0.0117}$ & $0.7297_{\pm0.0221}$ & $0.7478_{\pm0.0127}$ & $0.7974_{\pm0.0092}$ \\
DIFFormer        & $0.7650_{\pm0.0099}$ & $0.6447_{\pm0.0217}$ & $0.7587_{\pm0.0066}$ & $0.4854_{\pm0.0058}$ & $0.3602_{\pm0.0070}$ & $0.3682_{\pm0.0110}$ & $0.6486_{\pm0.0441}$ & $0.7207_{\pm0.0337}$ & $0.7451_{\pm0.0320}$ \\
SpecFormer       & $0.4883_{\pm0.0084}$ & $0.4540_{\pm0.0100}$ & $0.6973_{\pm0.0079}$ & \second{0.5623$_{\pm0.0089}$} & \best{0.4454$_{\pm0.0141}$} & $0.3647_{\pm0.0087}$ & \second{0.7478$_{\pm0.0255}$} & \second{0.7838$_{\pm0.0221}$} & $0.7778_{\pm0.0092}$ \\
Exphormer        & $0.6120_{\pm0.0213}$ & $0.5117_{\pm0.0070}$ & $0.6987_{\pm0.0170}$ & $0.5161_{\pm0.0068}$ & $0.3490_{\pm0.0071}$ & $0.3577_{\pm0.0045}$ & $0.5405_{\pm0.0441}$ & $0.3784_{\pm0.0221}$ & $0.6994_{\pm0.0333}$ \\
GRIT             & $\mathrm{OOM}$        & $\mathrm{OOM}$        & $\mathrm{OOM}$        & $0.4722_{\pm0.0260}$ & $\mathrm{OOM}$        & $\mathrm{OOM}$        & $0.6486_{\pm0.0584}$ & $0.7027_{\pm0.0382}$ & \second{0.8039$_{\pm0.0160}$} \\
NodeFormer       & $0.7103_{\pm0.0175}$ & $0.5903_{\pm0.0204}$ & $0.6953_{\pm0.0144}$ & $0.4788_{\pm0.0058}$ & $0.3442_{\pm0.0063}$ & $0.3520_{\pm0.0047}$ & $0.6577_{\pm0.0459}$ & $0.6937_{\pm0.0255}$ & $0.7386_{\pm0.0403}$ \\
CoBFormer        & $0.7243_{\pm0.0118}$ & $0.6300_{\pm0.0127}$ & $0.7203_{\pm0.0133}$ & $0.5124_{\pm0.0057}$ & $0.3737_{\pm0.0049}$ & $0.3719_{\pm0.0088}$ & $0.7117_{\pm0.0337}$ & $0.7387_{\pm0.0637}$ & $0.7647_{\pm0.0320}$ \\
SGFormer         & $0.7703_{\pm0.0021}$ & $0.6580_{\pm0.0008}$ & $0.7347_{\pm0.0034}$ & $0.5080_{\pm0.0010}$ & $0.3401_{\pm0.0068}$ & \best{0.3746$_{\pm0.0084}$} & $0.7207_{\pm0.0127}$ & $0.7568_{\pm0.0221}$ & $0.7843_{\pm0.0000}$ \\
DeGTA            & $0.7533_{\pm0.0056}$ & $0.6190_{\pm0.0232}$ & $\mathrm{OOM}$        & $0.4920_{\pm0.0045}$ & $0.3164_{\pm0.0079}$ & $0.3507_{\pm0.0067}$ & $0.5315_{\pm0.0459}$ & $0.6126_{\pm0.0127}$ & $0.5229_{\pm0.0092}$ \\
\midrule
\rowcolor{blue!8}
\textbf{Ours (HopFormer)} & \second{0.7850$_{\pm0.0125}$} & \best{0.6850$_{\pm0.0346}$} & \second{0.7630$_{\pm0.0122}$} & \best{0.5738$_{\pm0.0278}$} & $0.3748_{\pm0.0033}$ & \second{0.3736$_{\pm0.0016}$} & \best{0.7657$_{\pm0.0412}$} & \best{0.7937$_{\pm0.0270}$} & \best{0.8169$_{\pm0.0113}$} \\
\bottomrule
\end{tabular}
}
\end{table*}

\begin{table}[t]
\centering
\caption{
Performance on graph-level benchmark datasets, reported as mean$_{\pm \text{std}}$ over 3 runs.
``$\mathrm{OOM}$'' denotes configurations that resulted in out-of-memory errors for all tested hyperparameter settings.
The \best{best} and \second{second-best} results for each dataset are highlighted.
}
\label{tab:table3}
\resizebox{\linewidth}{!}{
\begin{tabular}{lccccc}
\toprule
\textbf{Model} & \textbf{OGBG-MolHIV} & \textbf{OGBG-MolPCBA} & \textbf{Peptides-Func} & \textbf{Peptides-Struct} & \textbf{ZINC} \\
 & AUC$\uparrow$ & AP$\uparrow$ & AP$\uparrow$ & MAE$\downarrow$ & MAE$\downarrow$ \\
\midrule
\multicolumn{6}{l}{\textbf{Message Passing Neural Networks}} \\
GCN              & $0.6888_{\pm0.0127}$ & $0.0953_{\pm0.0012}$ & $0.3614_{\pm0.0036}$ & $0.4347_{\pm0.0033}$ & $0.6090_{\pm0.0155}$ \\
GAT              & $0.7385_{\pm0.0300}$ & $0.1012_{\pm0.0015}$ & $0.3719_{\pm0.0041}$ & $0.4231_{\pm0.0049}$ & $0.6225_{\pm0.0096}$ \\
APPNP            & $0.7550_{\pm0.0000}$ & $0.0973_{\pm0.0011}$ & $0.3964_{\pm0.0022}$ & $0.4374_{\pm0.0050}$ & $0.6619_{\pm0.0067}$ \\
\midrule
\multicolumn{6}{l}{\textbf{Graph Transformers}} \\
Graphtransformer & $0.6350_{\pm0.0112}$ & $0.0968_{\pm0.0001}$ & $0.3622_{\pm0.0036}$ & $0.4345_{\pm0.0026}$ & $0.6038_{\pm0.0151}$ \\
Graphormer       & $\mathrm{OOM}$        & $\mathrm{OOM}$        & $\mathrm{OOM}$        & $\mathrm{OOM}$        & $0.1305_{\pm0.0072}$ \\
SAN              & $\mathrm{OOM}$        & $\mathrm{OOM}$        & $\mathrm{OOM}$        & $\mathrm{OOM}$        & $0.1341_{\pm0.0063}$ \\
GPS              & \second{$0.7715_{\pm0.0135}$} & $\mathrm{OOM}$        & \best{0.6565$_{\pm0.0038}$} & \second{0.2512$_{\pm0.0008}$} & $0.1968_{\pm0.0120}$ \\
GPS+RWSE         & $0.7713_{\pm0.0105}$ & \best{0.2888$_{\pm0.0019}$} & $0.6427_{\pm0.0035}$ & $0.2851_{\pm0.0012}$ & \second{0.0681$_{\pm0.0004}$} \\
GPS+GE           & $\mathrm{OOM}$        & $\mathrm{OOM}$        & $\mathrm{OOM}$        & $\mathrm{OOM}$        & $0.1789_{\pm0.0071}$ \\
DIFFormer        & $0.7169_{\pm0.0103}$ & $0.0785_{\pm0.0021}$ & $0.3810_{\pm0.0013}$ & $0.4541_{\pm0.0005}$ & $0.6602_{\pm0.0294}$ \\
Exphormer        & $\mathrm{OOM}$        & $\mathrm{OOM}$        & $0.4003_{\pm0.0058}$ & $0.3297_{\pm0.0085}$ & $0.1905_{\pm0.0276}$ \\
GRIT             & $0.7608_{\pm0.0165}$ & \second{0.1931$_{\pm0.0018}$} & $\mathrm{OOM}$        & $\mathrm{OOM}$        & \best{0.0607$_{\pm0.0000}$} \\
GraphMLPMixer    & $0.6889_{\pm0.0146}$ & $0.1386_{\pm0.0018}$ & $0.4225_{\pm0.0095}$ & $0.3065_{\pm0.0019}$ & $0.3617_{\pm0.0222}$ \\
SGFormer         & $0.6794_{\pm0.0076}$ & $0.0933_{\pm0.0040}$ & $0.3726_{\pm0.0018}$ & $0.4492_{\pm0.0002}$ & $0.6703_{\pm0.0033}$ \\
\midrule
\rowcolor{blue!8}
\textbf{Ours (HopFormer)} & \best{0.7997$_{\pm0.0078}$} & 0.1849$_{\pm0.0116}$ & \second{0.6467$_{\pm0.0124}$} & \best{0.2503$_{\pm0.0016}$} & 0.1306$_{\pm0.0194}$ \\
\bottomrule
\end{tabular}
}
\end{table}

\begin{figure}[t]
    \centering
    \includegraphics[width=1.0\linewidth]{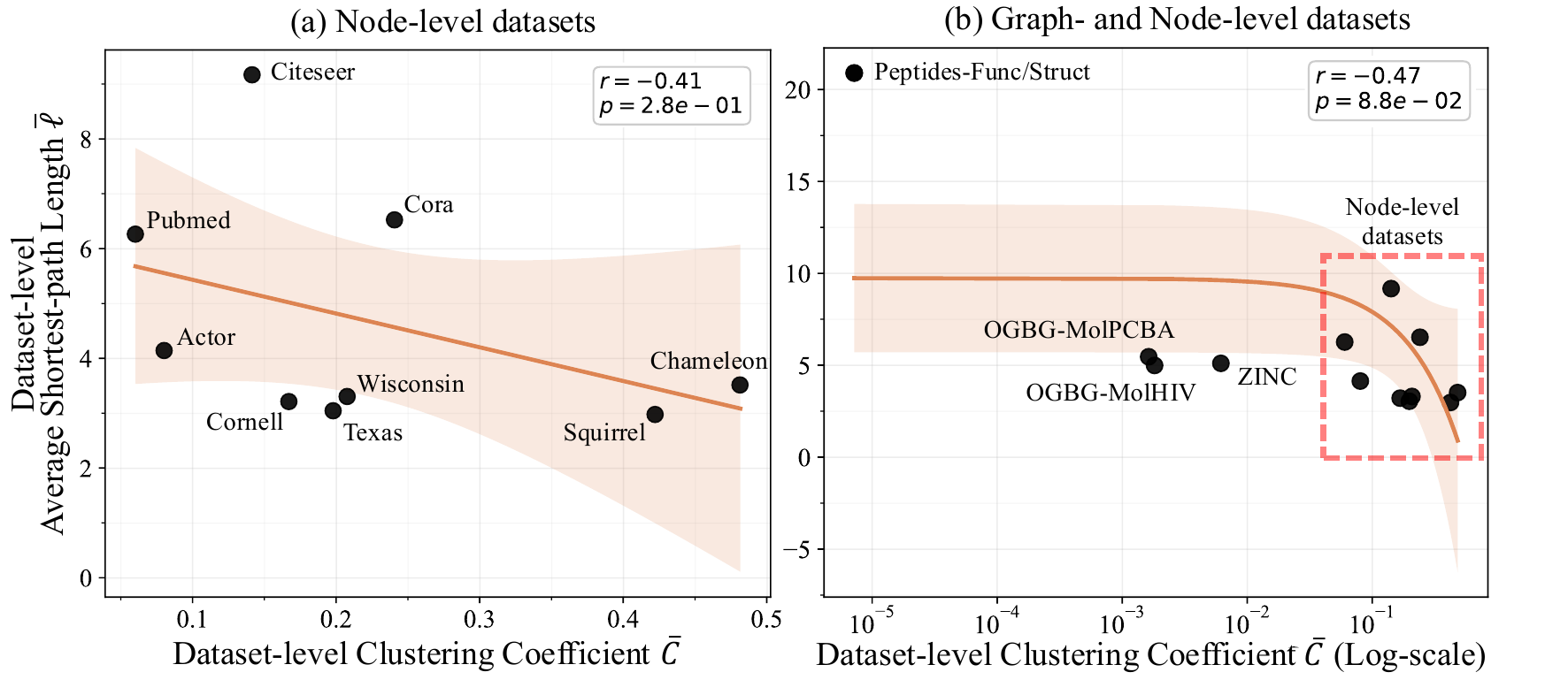}
    \caption{
        Small-world measures of (a) node-level and (b) graph-level benchmark datasets.
        Graph-level datasets generally exhibit weaker small-world characteristics than node-level datasets.
        The shaded region denotes the standard deviation of the linear fit.
    }
    \label{fig:Cvsl}
\end{figure}

\begin{figure}[t]
    \centering
    \includegraphics[width=1.0\linewidth]{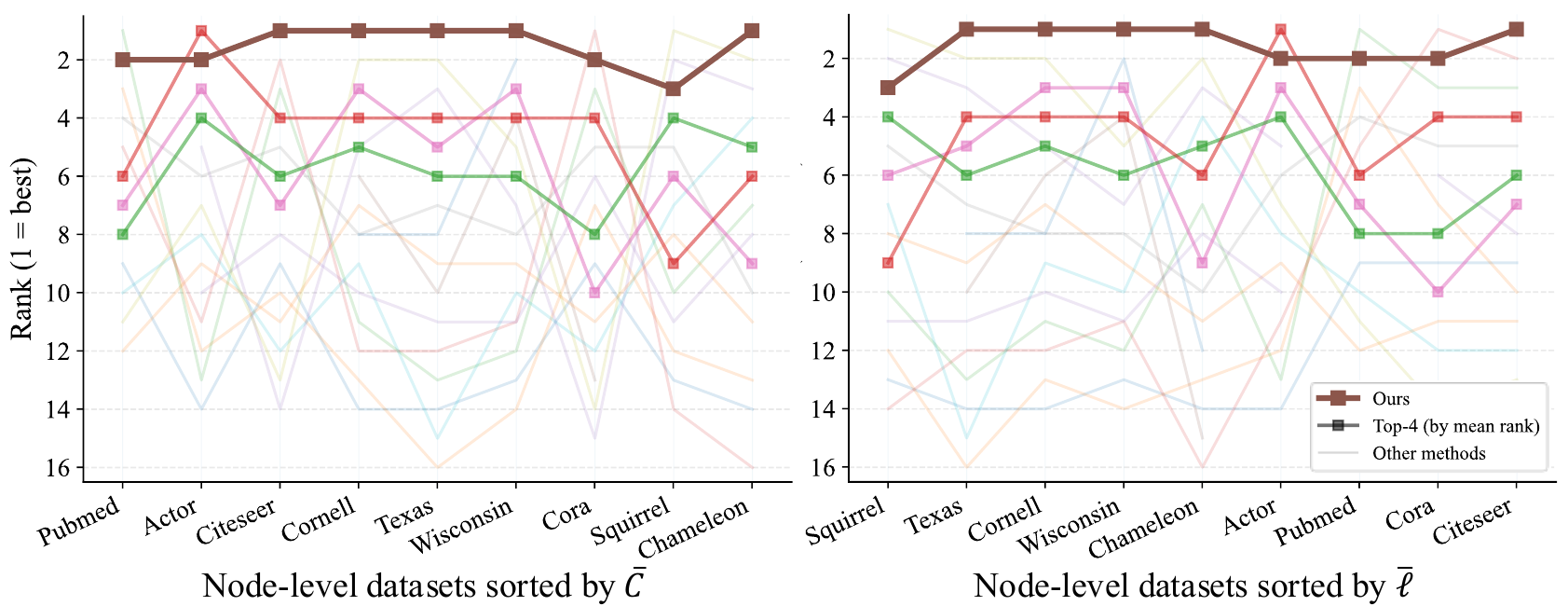}
    \caption{
        Rank trajectories of different methods across node-level datasets.
        Our method maintains consistently strong ranks, while other approaches exhibit larger variability as datasets' small-world characteristics change.
    }
    \label{fig:rank_trajectories}
\end{figure}

\begin{figure*}[t]
    \centering
    \includegraphics[width=0.8\linewidth]{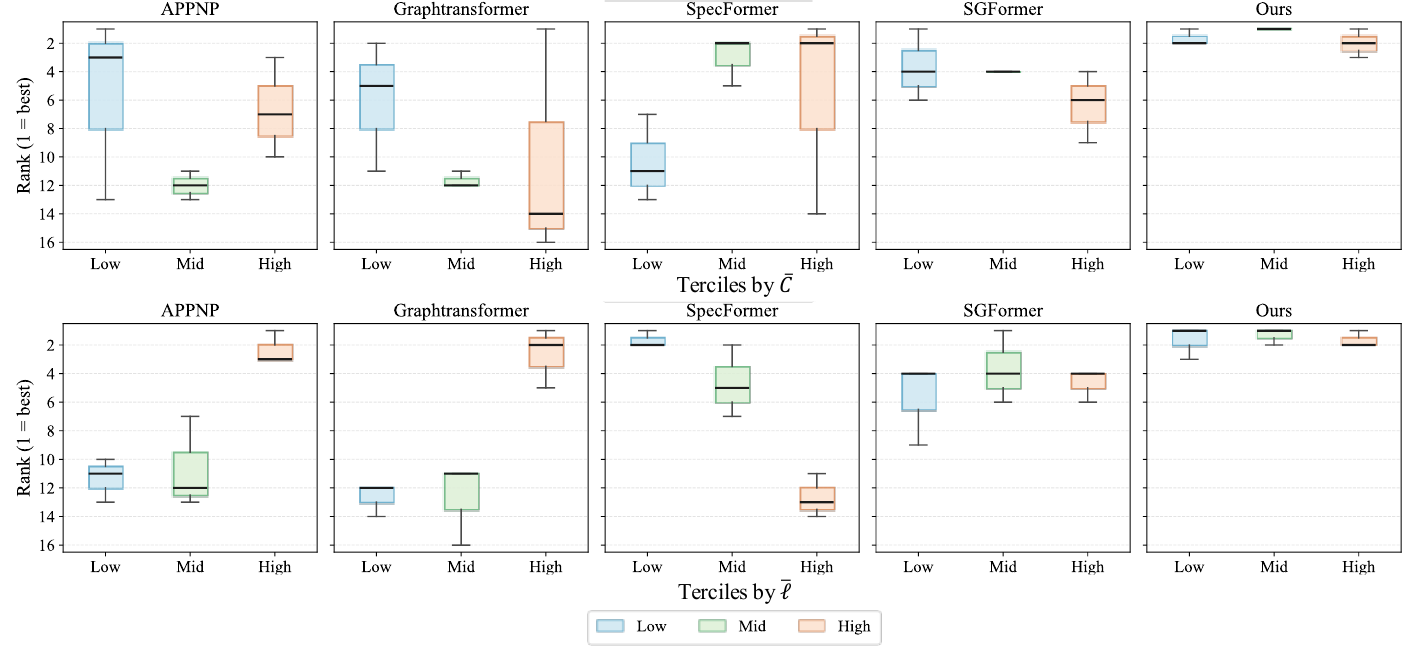}
    \caption{
        Per-method rank distributions across low, mid, and high terciles of small-world measures:
        clustering coefficient $\bar{C}$ (top) and average shortest-path length $\bar{\ell}$ (bottom).
        Lower ranks indicate better performance.
        Boxes indicate the interquartile range, center lines denote the median, and whiskers show the data range (excluding outliers).
    }
    \label{fig:groupedrank}
\end{figure*}

\begin{figure}[t]
    \centering
    \includegraphics[width=1.0\linewidth]{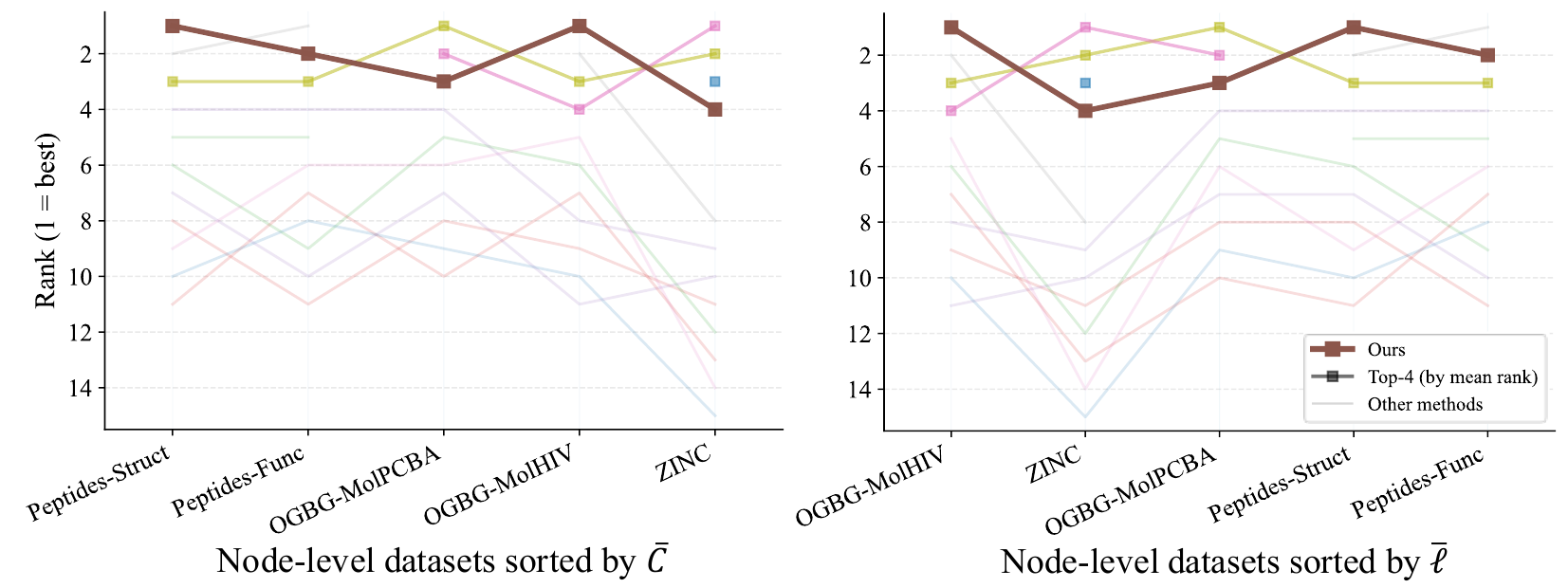}
    \caption{
        Rank trajectories of different methods across graph-level datasets.
        In contrast to node-level benchmarks, the top-performing methods exhibit consistently similar ranks, indicating that when small-world effects are weaker, Transformer-based models tend to achieve comparable performance.
    }
    \label{fig:graph_rank_trajectories}
\end{figure}

\begin{figure}[t]
    \centering
    \includegraphics[width=0.6\linewidth]{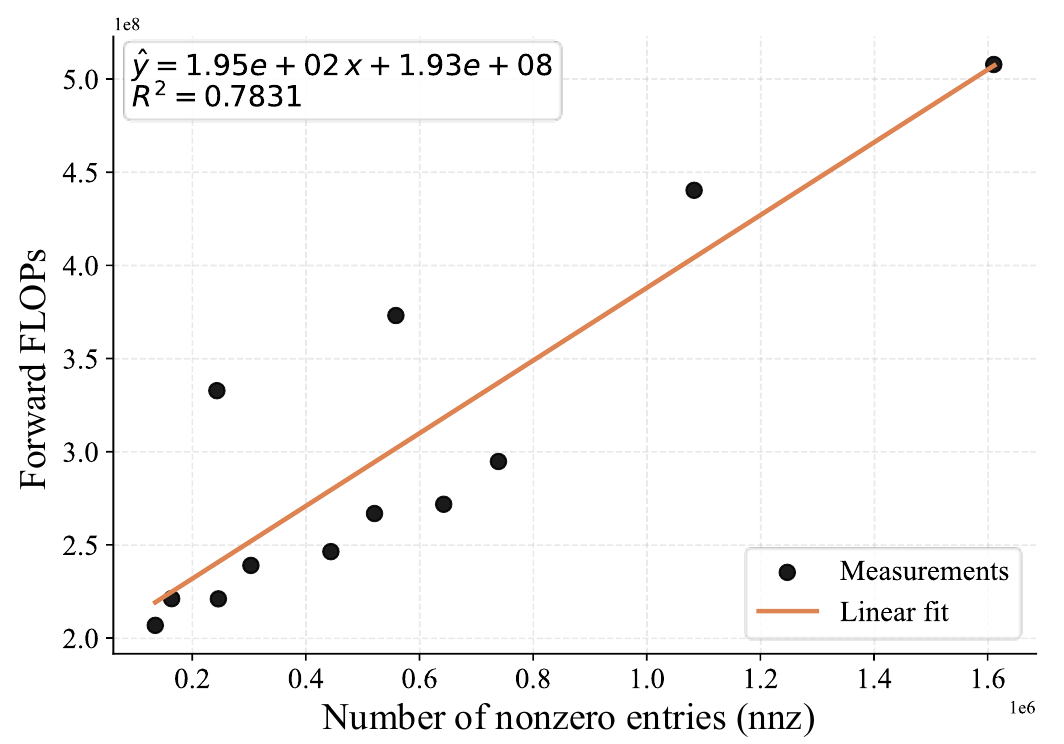}
    \caption{
        Forward-pass FLOPs as a function of the number of nonzero entries in the $n$-hop attention masks.
        Results are obtained on Cornell, Texas, and Wisconsin using single-layer models with different head hop configurations ($\{3,6,12,24\}$, $\{3,3,6,12\}$, $\{3,3,3,6\}$, and $\{3,3,3,3\}$).
        The linear trend indicates that computation scales proportionally with mask sparsity.
    }
    \label{fig:nnz_vs_Flops}
\end{figure}

\subsection{Experimental Setting}

We follow the standardized and fully reproducible experimental protocols established in OpenGT~\cite{tang2025opengt}, ensuring fair and consistent comparisons across models, datasets, and tasks.

\paragraph{Baselines.}
We compare against a comprehensive set of state-of-the-art Graph Transformer architectures, including
DIFFormer~\cite{wu2023difformer}, NodeFormer~\cite{wu2022nodeformer}, GraphGPS~\cite{rampasek2022graphgps}, Graphormer~\cite{ying2021transformers}, GRIT~\cite{ma2023grit}, SGFormer~\cite{wu2023sgformer}, SAN~\cite{kreuzer2021rethinking}, SpecFormer~\cite{bo2023specformer}, Exphormer~\cite{shirzad2023exphormer}, GraphTransformer~\cite{dwivedi2020generalization}, GraphMLPMixer~\cite{he2023generalization}, CoBFormer~\cite{xing2024less}, and DeGTA~\cite{wang2025graph}.
In addition, we include representative message-passing GNNs—GCN~\cite{kipf2016semi}, GAT~\cite{velivckovic2017graph}, and APPNP~\cite{gasteiger2018predict}—to contrast Transformer-based models with local aggregation baselines.

\paragraph{Datasets.}
We evaluate on a diverse suite of node-level and graph-level benchmarks summarized in Appendix~\autoref{tab:opengt_datasets}.
Node-level tasks span citation networks (Cora, Citeseer, Pubmed~\cite{sen2008collective}), web graphs (Chameleon, Squirrel~\cite{rozemberczki2019gemsec}), actor co-occurrence graphs (Actor~\cite{shchur2018pitfalls}), and heterophilous university web networks (Texas, Cornell, Wisconsin~\cite{pei2020geom}).
Graph-level tasks include molecular property prediction (ZINC~\cite{irwin2012zinc}, OGBG-MolHIV, OGBG-MolPCBA~\cite{hu2020open}) and peptide benchmarks from LRGB (Peptides-Func, Peptides-Struct~\cite{dwivedi2022long}).
These datasets exhibit wide variation in size, sparsity, homophily, and small-world characteristics, enabling a thorough evaluation across structural regimes.

\paragraph{Implementation and Hyperparameters.}
All baseline models are implemented using the OpenGT codebase~\cite{tang2025opengt}.
For each baseline, we strictly follow the same hyperparameter search spaces and training protocols as OpenGT, summarized in Appendix~\autoref{tab:opengt_hparams}.
Our method introduces one additional architectural hyperparameter: the per-head $n$-hop receptive field.
Unless otherwise specified, we fix the number of attention heads to four and search over head hop values $n \in \{1, 3, 6, 12, 24, 48\}$.
All experiments are conducted on an NVIDIA GeForce RTX~3090 GPU with 24\,GB memory, and all reported results correspond to the best-performing configuration within this unified search space, ensuring a fair and controlled comparison.

\subsection{Experiment Results}

We empirically evaluate our approach across a wide range of node-level and graph-level benchmarks to answer three core questions:
(i) Is explicit positional or structural encoding necessary for effective graph Transformers?
(ii) Is global dense attention always required?
(iii) Does the computational complexity of our sparse attention formulation follow the theoretical analysis?

\paragraph{Is Explicit Structural Encoding Necessary?}
We first examine whether injecting graph structure via explicit positional encodings or architectural modifications is essential for strong performance.
\autoref{tab:table2} and \autoref{tab:table3} summarize results on node-level and graph-level benchmarks, respectively.
\
Across node-level datasets, our method consistently achieves top or near-top performance, outperforming or matching state-of-the-art graph Transformers that rely on sophisticated structural encodings (e.g., Laplacian eigenvectors, random-walk features, or spectral bases).
Notably, our approach requires none of these components and instead injects graph structure solely through $n$-hop mask-guided sparse attention.
This demonstrates that explicitly controlling the receptive field via sparsity is sufficient to encode meaningful graph structure, without resorting to complex positional encodings or architectural augmentations.
\
A similar trend holds for graph-level benchmarks in \autoref{tab:table3}.
Despite its simplicity, our method achieves the best overall performance on OGBG-MolHIV and Peptides-Struct, while remaining competitive on other datasets.
These results provide strong empirical evidence that explicit positional or structural encodings are not a prerequisite for effective graph Transformer models.

\paragraph{Is Global Dense Attention Always Required?}
We next investigate whether global dense attention is universally beneficial across graphs with different structural properties.
To this end, we first analyze the small-world characteristics of each benchmark.
\
\autoref{fig:Cvsl} visualizes the clustering coefficient $\bar{C}$ and average shortest-path length $\bar{\ell}$ for both node-level and graph-level datasets, where $\bar{C}$ and $\bar{\ell}$ are computed by averaging the per-graph $C$ and $\ell$ values within each dataset.
Node-level benchmarks generally exhibit strong small-world properties, characterized by high clustering and short path lengths, whereas graph-level datasets show substantially weaker small-world effects.
\
We then analyze method behavior across this spectrum.
On node-level datasets, where small-world structure is pronounced, rank trajectories in \autoref{fig:rank_trajectories} reveal that many existing methods exhibit large performance fluctuations as dataset structure changes.
This instability stems from their fixed or implicit receptive fields, which cannot adapt to varying graph connectivity patterns.
In contrast, our method maintains consistently strong ranks across all datasets.
\
This effect is further quantified in \autoref{fig:groupedrank}, which groups datasets into low, mid, and high terciles according to $\bar{C}$ and $\bar{\ell}$.
Among all methods that achieve rank~1 on at least one benchmark, only our approach exhibits both high median performance and narrow variance across all small-world regimes.
This highlights the advantage of explicitly configurable receptive fields.
\
The situation changes for graph-level benchmarks.
As shown in \autoref{fig:graph_rank_trajectories} and \autoref{fig:groupedrank}, when small-world effects are weak, global attention saturates in effectiveness.
The top-performing Transformer-based methods converge to similar ranks, indicating diminishing returns from full global connectivity.
These observations collectively suggest that global dense attention is not universally necessary and that the ability to adaptively control the receptive field is critical for generalization across diverse graph structures.

\paragraph{Computational Efficiency and Scalability}
Finally, we empirically validate our computational complexity analysis.
\autoref{fig:nnz_vs_Flops} plots forward-pass FLOPs against the number of nonzero entries in the $n$-hop attention masks across multiple configurations.
\
The results show a clear linear relationship between FLOPs and mask sparsity, confirming that the computational cost of our method scales proportionally with $\mathrm{nnz}(\mathbf{M})$.
This validates our theoretical analysis and demonstrates that our approach enables flexible trade-offs between accuracy and efficiency.
By adjusting the receptive field according to graph structure, our model can achieve competitive performance with substantially reduced computation when global attention is unnecessary.

\section{Conclusion}
We show that effective graph Transformers do not require explicit positional or structural encodings, nor architectural modifications to the standard Transformer. Injecting topology solely via head-specific $n$-hop masked sparse attention is sufficient to capture graph structure, while preserving architectural simplicity and providing explicit, interpretable control over receptive fields with a clear efficiency–expressivity trade-off. Across diverse node-level and graph-level benchmarks, this minimal design is competitive with or outperforms prior graph Transformers, and exhibits markedly more stable performance on graphs with strong small-world structure. On graph-level benchmarks where small-world effects are weaker, performance saturates and multiple Transformer variants converge to similar ranks, indicating that dense global attention offers diminishing returns.

\paragraph{Limitations and Future Work.}
Our analysis of the link between small-world properties and optimal receptive fields remains primarily empirical. A more thorough characterization---ideally with theory---could explain when and why particular hop budgets are sufficient, and enable automatic selection or learning of $n$-hop configurations. This would reduce reliance on hyperparameter tuning and improve usability and robustness across new domains and scales.


\footnotesize
\section*{Acknowledgements}
This work was supported in part by the DARPA Young Faculty Award, the National Science Foundation (NSF) under Grants \#2127780, \#2319198, \#2321840, \#2312517, \#2431561, and \#2235472, the Semiconductor Research Corporation (SRC), the Office of Naval Research through the Young Investigator Program Award, and Grants \#N00014-21-1-2225 and \#N00014-22-1-2067, Army Research Office Grant \#W911NF2410360. Additionally, support was provided by the Air Force Office of Scientific Research under Award \#FA9550-22-1-0253, along with generous gifts from Xilinx and Cisco.

\normalsize
\bibliography{references}

\clearpage
\setcounter{page}{1}
\section*{Appendix}

\section{Additional Related Work and Background}

\paragraph{Graph Transformers.}
Graph Transformers extend self-attention to graph-structured data by treating nodes—and in some cases edges—as tokens, while injecting structural information to compensate for the lack of canonical ordering. Most existing approaches encode graph structure through explicit positional or structural encodings, including shortest-path-distance and degree biases in Graphormer~\cite{ying2021transformers}, Laplacian eigenvectors or spectral features~\cite{dwivedi2020generalization,kreuzer2021rethinking,zhang2020graph}, or hybrid Transformer--MPNN architectures such as GraphGPS and GRIT that combine global attention with message passing~\cite{rampasek2022graphgps,ma2023grit}. While empirically effective, these methods introduce encoding-dependent inductive biases and typically retain dense global self-attention, resulting in quadratic or higher complexity that is misaligned with graph sparsity. Distance-restricted or sparse attention variants have been explored~\cite{ying2021transformers,kreuzer2021rethinking}, but are usually tied to fixed encodings or applied as secondary constraints, without explicit, per-head control over receptive fields. In contrast, our approach injects graph structure solely through head-specific $n$-hop attention masks, enabling topology-aligned sparse attention with interpretable receptive fields, while preserving the standard Transformer architecture without positional encodings or auxiliary modules.

\section{Proofs}

\subsection{Proof of \autoref{thm:topology_flow}}
\begin{proof}\label{pro:topology_flow}
Let $\mathbf{M}^{(h)}\in\{0,1\}^{(N+M)\times(N+M)}$ denote the head-specific attention mask
defined in \autoref{eq:defmask}.
By construction, $\mathbf{M}^{(h)}_{ij}=1$ if and only if token $j$ is reachable from
token $i$ within at most $n_h$ hops in $\widetilde{G}$.

Under the masked attention formulation in \autoref{eq:dotproduct}, attention scores are
computed only on the support of $\mathbf{M}^{(h)}$. That is, for any $j$ such that
$\mathbf{M}^{(h)}_{ij}=0$, the corresponding query–key interaction is excluded from the
softmax normalization, yielding an effective attention weight
$\alpha^{(h)}_{ij}=0$.

Consequently, the output of head $h$ at token $i$ can be written as
\[
\mathbf{z}^{(h)}_i
=
\sum_{j:\,\mathbf{M}^{(h)}_{ij}=1}
\alpha^{(h)}_{ij}\,\mathbf{W}_V^{(h)}\mathbf{h}^{(0)}_j
=
\sum_{j\in\mathcal{N}_{n_h}(i)}
\alpha^{(h)}_{ij}\,\mathbf{W}_V^{(h)}\mathbf{h}^{(0)}_j,
\]
where $\mathcal{N}_{n_h}(i)$ denotes the $n_h$-hop neighborhood of $i$ in
$\widetilde{G}$. Therefore, the representation $\mathbf{z}^{(h)}_i$ depends only on
tokens within this neighborhood, completing the proof.
\end{proof}

\subsection{Proof of \autoref{thm:multihead_expressiveness}}
\begin{proof}\label{pro:multihead_expressiveness}
Consider two tokens $i$ and $j$ such that
$j\in\mathcal{N}_{n_{h'}}(i)$ but $j\notin\mathcal{N}_{n_h}(i)$
for some $h\neq h'$.
By Theorem~\ref{thm:topology_flow}, head $h'$ can incorporate information from $j$ into
$\mathbf{z}^{(h')}_i$, whereas head $h$ cannot.

Let $\mathbf{z}_i = [\mathbf{z}^{(1)}_i;\dots;\mathbf{z}^{(H)}_i]$ denote the concatenated
multi-head output.
Then $\mathbf{z}_i$ jointly encodes information from multiple, distinct neighborhood
scales.
No single-hop-budget head can replicate this representation:
a smaller hop budget excludes $j$, while a larger hop budget necessarily aggregates
additional nodes beyond $\mathcal{N}_{n_{h'}}(i)$, yielding a different function.

By Assumptions~1 and~2, these differences are preserved through value projections and
post-attention mappings.
Hence, the function class realizable by multi-$n$-hop heads strictly contains that of
uniform-hop heads.
\end{proof}

\section{Additional Tables}

\begin{table}[h]
\centering
\caption{Statistics of the datasets used in the benchmark.}
\label{tab:opengt_datasets}
\resizebox{\linewidth}{!}{
\begin{tabular}{lccccccc}
\toprule
\textbf{Dataset} & \textbf{Level} & \textbf{\#Graphs} & \textbf{Avg. Nodes} & \textbf{Avg. Edges} & \textbf{\#Classes} & \textbf{\#Features} & \textbf{Metric} \\
\midrule
Cora        & Node  & 1 & 2,708  & 5,429   & 7   & 1,433 & Accuracy \\
Citeseer   & Node  & 1 & 3,327  & 4,732   & 6   & 3,703 & Accuracy \\
Pubmed     & Node  & 1 & 19,717 & 44,338  & 3   & 500   & Accuracy \\
Squirrel   & Node  & 1 & 5,201  & 217,073 & 10  & 2,089 & Accuracy \\
Chameleon  & Node  & 1 & 2,277  & 31,421  & 10  & 2,325 & Accuracy \\
Actor      & Node  & 1 & 7,600  & 30,019  & 5   & 931   & Accuracy \\
Texas      & Node  & 1 & 183    & 325     & 5   & 1,703 & Accuracy \\
Cornell    & Node  & 1 & 183    & 298     & 5   & 1,703 & Accuracy \\
Wisconsin  & Node  & 1 & 251    & 515     & 5   & 1,703 & Accuracy \\
\midrule
ZINC              & Graph & 12,000  & 23.2 & 49.8 & --  & 28 & MAE \\
OGBG-MolHIV       & Graph & 41,127  & 25.5 & 27.5 & 2   & 9  & ROC-AUC \\
OGBG-MolPCBA      & Graph & 437,929 & 26.0 & 28.1 & 128 & 9  & AP \\
Peptides-Func     & Graph & 15,535  & 151  & 307  & 10  & 9  & AP \\
Peptides-Struct   & Graph & 15,535  & 151  & 307  & 11  & 9  & MAE \\
\bottomrule
\end{tabular}
}
\end{table}

\begin{table}[h]
\centering
\caption{Hyperparameter search space for all implemented Graph Transformer models.}
\label{tab:opengt_hparams}
\resizebox{\linewidth}{!}{
\begin{tabular}{ll}
\toprule
\textbf{Algorithm} & \textbf{Hyperparameter Search Space} \\
\midrule
\multicolumn{2}{l}{\textbf{General Settings}} \\
Learning rate            & $\{10^{-4}, 3\!\times\!10^{-4}, 10^{-3}, 3\!\times\!10^{-3}, 10^{-2}\}$ \\
Weight decay             & $\{10^{-5}, 3\!\times\!10^{-5}, 10^{-4}, 3\!\times\!10^{-4}, 10^{-3}\}$ \\
Number of layers         & $\{1, 2, 3, 4\}$ \\
Number of attention heads& $\{1, 2, 3, 4\}$ \\
Dropout                  & $\{0, 0.2, 0.5, 0.8\}$ \\
Attention dropout        & $\{0, 0.2, 0.5, 0.8\}$ \\
\midrule
SGFormer~\cite{wu2023sgformer} & Aggregate method: \{add, cat\} \\
                              & Graph weight: \{0.2, 0.5, 0.8\} \\
\midrule
DIFFormer~\cite{wu2023difformer} & Graph weight: \{0.2, 0.5, 0.8\} \\
\midrule
DeGTA~\cite{wang2025graph}       & $K$: \{2, 4, 8\} \\
\midrule
\textbf{Ours} 
& Attention head configuration: fixed to 4 heads; \\
& each head hop $n \in \{1, 3, 6, 12, 24, 48\}$ \\
\bottomrule
\end{tabular}
}
\end{table}

\end{document}